\title{An Optimization Approach for a Robust and Flexible Control in Collaborative Applications} %A Collaborative Constraint-Oriented Control Architecture}
\author{Federico Benzi and Cristian Secchi% <-this % stops a space
\thanks{F. Benzi and C. Secchi are
    with the Department of Sciences and Methods of Engineering,
    University of Modena and Reggio Emilia, Italy  {\tt\small
      \{federico.benzi,cristian.secchi\}@unimore.it}}}
\begin{document}

\maketitle
\begin{abstract}
    
    In Human-Robot Collaboration, the robot operates in a highly dynamic environment. Thus, it is pivotal to guarantee the robust stability of the system during the interaction but also a high flexibility of the robot behavior in order to ensure safety and reactivity to the variable conditions of the collaborative scenario.\\
    In this paper we propose a control architecture capable of maximizing the flexibility of the robot while guaranteeing a stable behavior when physically interacting with the environment. This is achieved by combining an energy tank based variable admittance architecture with control barrier functions. The proposed architecture is experimentally validated on a collaborative robot.

\end{abstract}
\section{Introduction}
\label{sec:intro}

In order to implement an effective human robot collaboration, it is fundamental to guarantee that the robot can adapt online its behavior in a stable and efficient way. A lot of work in this direction has been done using different techniques such as task redundancy \cite{Su2019RAL,Zanchettin2016TASE}, adaptive physical interaction \cite{CTLandi2019IJRR,Dimeas2016ToH} and coaching \cite{Lan19electronics,Kar17icra} to name a few. Nevertheless, at the best of the authors' knowledge, a control architecture that allows to address the problems of robust stability and flexibility both in free motion and during the interaction is still missing.

The behavior of a robot can be defined by specifying the way it physically interacts with the environment and the set of (dynamic) constraints it has to satisfy. Adapting the interaction implies to dynamically vary the parameters  (e.g. stiffness, inertia) that determine the way forces are exchanged with the environment. Dynamically constraining the motion of the robot allows to dynamically shape its behavior.

Admittance control \cite{Villani2016} is a very popular strategy for controlling the interaction, and it has been widely used  in collaborative  scenarios  (see, e.g., \cite{Dimeas2016ToH, CTLandi2019IJRR,Ayd18toh}). It enforces the robot to reproduce a desired passive physical behavior (e.g. mass-spring-damper), i.e. the admittance dynamics. Because of the passivity of the admittance dynamics, the behavior of the robot is robustly stable, i.e. stable both in free motion and when interacting with a poorly known environment (see e.g.\cite{Secchi2007}).
%By forcing the robot to mimic a given physical dynamics, it is possible to guarantee the robustness of the interaction, since passivity is already embedded into the designed admittance dynamics. In \cite{Secchi2007} it was in fact shown that a passive system can robustly interact with a poorly known environment, under certain assumptions.\\

When using admittance control, the interaction dynamics is determined by the choice of the parameters of the admittance dynamics (e.g. inertia and stiffness) and a specific set of parameters may not be suitable for all stages of a given application \cite{Dimeas2016ToH}. 

This severely limits the flexibility of the controller, as changing online the dynamic parameters can lead to the loss of passivity \cite{Ferraguti2015TRO} and, consequently, to a possibly unstable behavior in human robot collaboration as shown in \cite{CTLandi2019IJRR}. 

Several strategies for implementing a variable admittance control have been proposed (see e.g. \cite{CTLandi2019IJRR,Grafakos2016SMC}) but they allow only a limited  variation of the interaction dynamics while significantly complicating the overall control architecture

In \cite{Secchi19ICRA}, an optimization framework has been proposed in order to implement a variable admittance controller in a robust and flexible way. Here, energy tanks \cite{Ferraguti2015TRO,Franken2011TRO} have been exploited for disembodying passivity from  a specific physical behavior. The energy stored in the tank has been exploited as a constraint to be satisfied for passively implementing any desired admittance dynamics.  This allows to achieve the maximal flexibility while satisfying the passivity constraints, i.e. robust stability.
Yet \cite{Secchi19ICRA} does not differentiate between the interaction phase and the free motion of the robot, and the admittance control is active in both situations. In a collaborative scenario, the robot often switches between free motion and interaction and constantly requiring the satisfaction of both the passivity constraint and the continuous implementation of a specific admittance dynamics may affect the performance of the robot in free motion. 

Moreover, robust stability is not enough to guarantee the complete safety and the flexibility of a robot in a shared workspace: additional time-varying constraints, such as obstacle and human avoidance, self collision and bounding the joint position, need to be satisfied for ensuring a safe and flexible behavior. 
Control barrier functions (CBFs) \cite{Ame17tac,Ame19ecc} have been successfully exploited in many robotics applications for forcing the robot to remain in a desired subset of the state space, i.e. to dynamically constrain the behavior of the robot (see e.g. \cite{Not18ral,Zha20toh,Fu20tnse}). Time Varying CBFs \cite{Not20tcst} allow to consider time-varying constraints and they have been successfully exploited for enforcing safety in human robot collaboration \cite{Fer20ral,Fer20ras}. Furthermore, tasks to be executed can be encoded as CBFs as shown in  \cite{notomista2020settheoretic}. The control input for satisfying the dynamic constraints modeled by the CBFs can be found by solving a convex optimization problem, which makes the CBFs amenable to real-time implementation. 

Passivity can be encoded using Control Barrier Functions \cite{Not19mrs} but no interaction controllers are available yet. 

 In this paper we propose a novel control architecture that merges energy tanks, for implementing a flexible and passive interactive behavior, and control barrier functions, for dynamically constraining the behavior of the robot. 
 The proposed architecture leverages control barrier functions to encode a series of constraints to be satisfied and energy tanks for passively implementing a variable admittance controller. The control input is the solution of a single convex optimization problem that considers both the passivity constraint and the constraints due to the control barrier functions.
  Using the proposed architecture it will be possible to make a collaborative robot extremely flexible both during free motion and during the interaction with possibly unknown environments.
 
 The contributions of this paper are:
 \begin{itemize}
     \item A convex optimization problem merging the passivity constraint and control barrier functions constraints;
     \item A control architecture capable of maximizing the flexibility of the robot both during the interaction and during free motion;
     \item An experimental validation of the proposed architecture on a collaborative robot.
 \end{itemize}
 
 %Thus, the contribution of this paper is an architecture capable of maximizing the flexibility of the robotic system, while at the same time ensuring robustness during the interaction, through passivity and energy tanks, and computational efficiency, given the convexity of the designed optimization problem.\\
This paper is organized as follows: in Sec. \ref{sec:problem} the main problem addressed in this paper is formulated. In Sec. \ref{sec:tanks} energy tanks, their usage in passivity preservation and the overall tank-based architecture is presented. In Sec. \ref{sec:CBF} the CBF-based architecture for flexible task execution is described, while in Sec. \ref{sec:admittance} the collaborative constraint-oriented control architecture is shown. In Sec. \ref{sec:experiments} the proposed architecture is validated onto a robotic setup, and finally in Sec. \ref{sec:conclusions} conclusions are drawn and future work is addressed. 

\section{Problem Formulation}
\label{sec:problem}
\begin{figure}%[tb]
	\centering
	\includegraphics[width=\columnwidth]{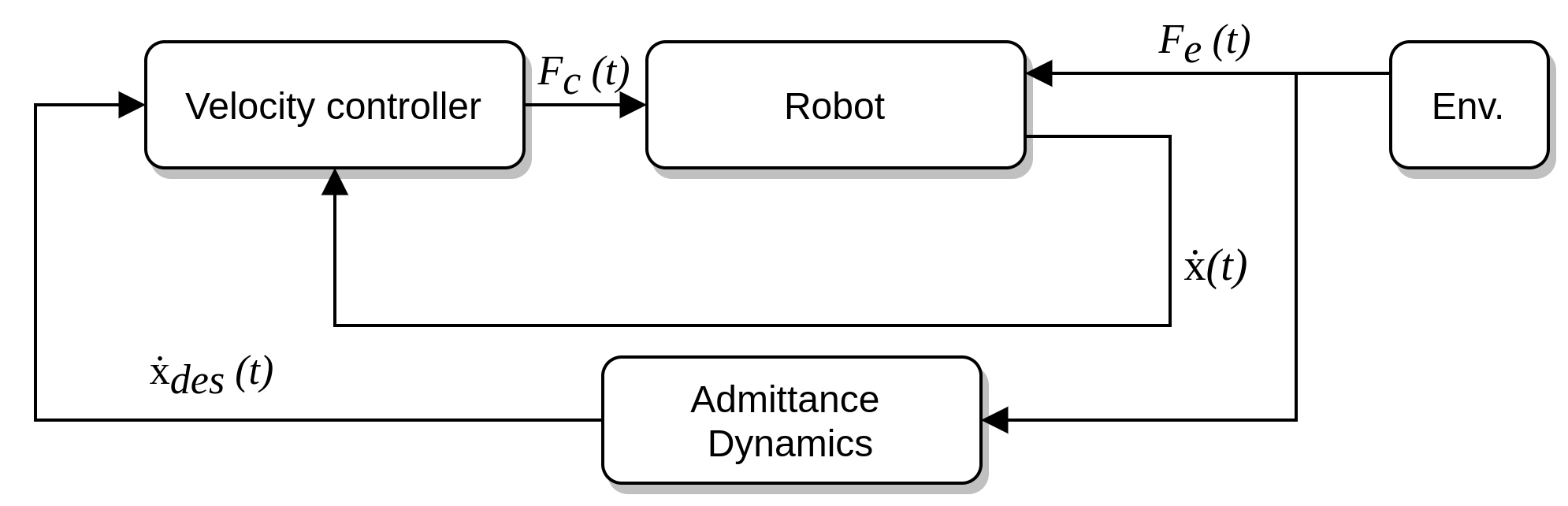}
	\caption{The admittance control architecture. The input $F_c$ represent the control force of the low level controller actuating the robot.}
	\label{fig:admittance architecture}
\end{figure}
Consider a velocity controlled fully actuated $n$-DOFs manipulator. The model of the robot can be described by:
 
\begin{equation} \label{eq:robot kinematic model}
  %  \begin{cases}
        \dot x = J(q)u %\\
        %y = k(x)
  %  \end{cases}
\end{equation}
where  $x \in \mathbb{R}^{m}$ is the pose of the end-effector and $q \in \mathbb{R}^{n}$ is the vector of joint variables. $J(q) \in \mathbb{R}^{m \times n}$ is the Jacobian of the robot and $u \in \mathbb{R}^{n}$ is the joint velocity input.

Thanks to the low-level velocity controller, a desired velocity profile $\dot x_{des}\in\mathbb{R}^m$ can be reproduced by  simply setting $u=J^+(q)\dot{x}_{des}$, where $J^+(q)$ denotes the pseudo-inverse of the Jacobian $J(q)$, such that $\dot x (t)\approx \dot{x}_{des}$.

The interaction with the environment is regulated by the admittance control architecture reported in Fig.~\ref{fig:admittance architecture}. In order to enforce a desired dynamic behavior for the robot, the interaction force $F_e(t)\in\mathbb{R}^m$ is integrated by the admittance dynamics that produces a desired velocity $\dot{x}_{a}$. %$\dot{x}_{des}(t)=\dot{x}_a \approx \dot x$.
By setting $\dot{x}_{des}(t)=\dot{x}_a \approx \dot x$, the robot exactly reproduces the admittance dynamics. Thus, if the admittance dynamics is passive, the controlled robot behaves as a passive system and, therefore, it is stable both in free motion and in contact with the environment, i.e. robustly stable. 

% We then consider that an interaction with a poorly-known environment may happen, which we aim to model using an admittance control strategy. The admittance control architecture in the task space is presented in Fig. \ref{fig:admittance architecture}. Supposing the interaction occurs away from singularities, we can reproduce a desired interactive behavior using the external force $F_{e}(t) \in \mathbb{R}^{n}$ to integrate the admittance dynamics, thus obtaining a velocity set-point. We will also make the assumption that the velocity controller and the low-level controller of the robot are tuned in such a way that $\dot{x} \approx \dot{x}_{des}(t)$, where $\dot{x}_{des}(t) \in \mathbb{R}^{n}$ is the velocity set-point to be tracked. This guarantees that the robot reproduces exactly the force/velocity behaviors of the desired dynamic. Thus, if the desired dynamic is passive, then also the robot behaves passively, achieving robust stability.\\

Admittance dynamics model mechanical systems and they can be represented by the following Euler-Lagrange model
% Furthermore, we wish to change online the desired interaction model by modifying the dynamic parameters. Let us consider a generic Euler-Lagrange admittance model
\begin{equation}\label{eq: admittance model}
    M(x_a)\ddot{x}_a + C(x_a,\dot{x}_a)\dot{x}_a + D(x_a)\dot{x}_a + \dfrac{\partial P}{\partial x_a}= F_{e}
\end{equation}
where $x_a\in\mathbb{R}^m$ is the pose and  $M(x_a) = M^{T}(x_a) > 0$ is the inertia matrix, $C(x_a,\dot{x}_a)$ represents the Coriolis effects, $D(x_a) \geq 0$ is a damping matrix and $P: \mathbb{R}^{m} \rightarrow \mathbb{R}$ is a potential field active on the system (e.g. elastic potential).\\
As well known (see, e.g., \cite{Secchi2007}), \eqref{eq: admittance model} is passive with respect to the pair $(\dot{x}_a(t),F_{e}(t))$, using the following storage function:
\begin{equation}\label{eq: admittance energy function}
    H_{a}(x_a, \dot{x}_a) = P(x_a) + \frac{1}{2}\dot{x}_a^{T}M(x_a)\dot{x}_a
\end{equation}

Unfortunately, when considering a variable admittance, i.e. such that the physical parameters characterizing the desired dynamics are time-varying, passivity can be lost. In fact, considering the time-varying version of \eqref{eq: admittance model} 
\begin{equation}\label{eq:tvaradm}
    M(x_a,t)\ddot{x}_a + C(x_a,\dot{x}_a,t)\dot{x}_a + D(x_a,t)\dot{x}_a + \dfrac{\partial P}{\partial x_a}(t)= F_{e}
\end{equation}
we have that 
%As previously stated, we can tune online the values of the potential, inertia and damping in order to change the desired interaction model, namely by making $M(x)$, $D(x)$ and $P$ time dependant. The energy function
\eqref{eq: admittance energy function} becomes then
\begin{equation} \label{eq: time-varying energy function}
    H_{a}(x_a,\dot{x}_a,t) = P(x_a,t) + \frac{1}{2}\dot{x}_a^{T}M(x_a,t)\dot{x}_a
\end{equation}
whence, using \eqref{eq:tvaradm}, we can write
\begin{equation} \label{eq: time-varying admittance power function}
    \dot{H}_{a}(x_a,\dot{x}_a,t) = F_{e}^{T}\dot{x}_a -\dot{x}^{T}_aD(x_a,t)\dot{x}_a + \dfrac{\partial H_{a}}{\partial t}
\end{equation}
where the last term is sign indefinite and can introduce energy into the system and, therefore, lead to a loss of passivity. 

Thus, standard admittance control is not suitable for implementing an interaction that is flexible and passive at the same time. 

Besides passivity, various dynamic and time-varying constraints are necessary for establishing safety and other desired behaviors for the robot. These constraint cannot be simply embedded in the standard admittance control architecture.

We aim at developing a control architecture that allows to passively implementing a time varying admittance dynamics as the one in \eqref{eq:tvaradm}  and to enforce a set of dynamic and time-varying constraints on the robot. 

% Given the overall formulation, we aim at developing an architecture capable of dynamically satisfying a set of time-varying constraints, while simultaneously guaranteeing both safety (i.e. stable behaviors) and flexibility (namely adapting the admittance parameters online) in the interaction with the environment.%, by exploiting both CBFs and energy tanks.

% We aim at rendering our system aware of the amount of energy currently available for implementing a desired dynamics, providing its best passive approximation according to the current energetic condition.

% Finally, we want to specify a set of time-varying constraints the robot has to satisfy, in order to guarantee the overall safety of the robotic application. We aim at modeling these constraints as regions of satisfaction, forcing the robot to move in a given subset of the state-space.  

% Given the overall formulation, we aim at developing an architecture capable of dynamically satisfying a set of time-varying constraints, while simultaneously guaranteeing both safety (i.e. stable behaviors) and flexibility (namely adapting the admittance parameters online) in the interaction with the environment.%, by exploiting both CBFs and energy tanks.

\section{Energy Tank Architecture}
\label{sec:tanks}

%\begin{itemize}
    %\item Energy tank (intro, definition of ports, storage...)
    %\item The energy tank architecture (ICRA19+Robot)
    %\item P1: Show that, if $T(0)>\varepsilon$ the whole system is passive independently of the modulation
    %$\gamma$
    %\item Using the tank we have built a shield that makes the robot always passive in case of interaction. Thus passivity, which implies robust stability. But $T(0)$ needs to be greater than 0 and this must be ensured
%\end{itemize}

%%%%%%%
% - intro on tanks and more details on papers...
% - Tank can be used to disembbody energy from a specific dyn
% - Tank equations
% - Tank interconnections
% - Modulayed tank
% - Show passivity (1 equation)
% - No p2 (?)

In this section we show how to exploit energy tanks \cite{Ferraguti2015TRO},\cite{Franken2011TRO} for reproducing a variable admittance dynamics \eqref{eq:tvaradm} on the robot \eqref{eq:robot kinematic model} following the approach presented in \cite{Secchi19ICRA}.

% We will start this section by providing some background regarding energy tanks. For further details see, e.g. \cite{Ferraguti2015TRO} and \cite{Franken2011TRO}.\\
% Our robot \eqref{eq:robot kinematic model} can be generalized as a control affine dynamical system. By supposing also that it is passive, we can model it as
% \begin{equation} \label{eq:robot model}
%     \begin{cases}
%         \dot x = f(x) + g(x)u \\
%         y = k(x)
%     \end{cases}
% \end{equation}
% where $u \in \mathbb{R}^{p}$, $y \in \mathbb{R}^{p}$, $f$ and $g$ are Lipschitz-continuous vector field, with $u$ being the input and $y$ the output, and $x \in \mathbb{R}^{n}$ is the robot state. \\
% Given the passivity of the system, we can define a non-negative energy function $H : \mathbb{R}^{n}\to\mathbb{R}$ such that
% \begin{equation}\label{eq:passivity balance}
%     H(t) - H(0) \leq \int_{0}^{t}u^{T}(\tau)y(\tau)d\tau
% \end{equation}
% According to this relation, the (generalized) power introduced over time into the system from the port $(u,y)$, namely $u^{T}y$, either contributes to the total energy of the system ($H$) or is dissipated, i.e. no internal energy is produced. The necessity of preserving a passive energy balance clearly poses many limitations during the design phase of the controller.\\
% By employing energy tanks, this procedure gets noticeably more manageable.\\

Energy tanks are energy storing elements which can be represented by:
\begin{equation} \label{eq:tank description}
    \begin{cases}
        \dot{x}_{t} = u_{t}\\
        y_{t} = \frac{\partial{T}}{\partial{x_{t}}} = x_{t}(t)
    \end{cases}
\end{equation}
where $x_{t}\in \mathbb{R}$ is the state of the tank, while the pair $(u_{t},y_{t}) \in\mathbb{R}\times\mathbb{R}$ represents the power port the tank can exchange energy through and
\begin{equation} \label{eq:tank energy}
    T(x_t)= \frac{1}{2}x^{2}_{t}
\end{equation}
is the function representing the energy stored in the tank. 

Energy tanks are completely disconnected from a specific physical dynamics. Thus, the energy stored in the tank can be exploited for implementing any desired behavior.
%The energy introduced in the tank is then stored and later employed to implement any desired port-behavior while guaranteeing the preservation of the overall passivity.\\
This can be achieved by interconnecting the power port of the tank $(u_t,y_t)$ with the power port $(F_e,\dot x_{des})$ of the implemented admittance dynamics as:
%modulating the port behavior as:
\begin{equation} \label{eq:tank modulation}
    \begin{cases}
    u_{t}(t) = A^{T}(t)F_e(t)\\
    \dot x_{des}(t) = A(t)y_{t}(t)
    \end{cases}
\end{equation}

where $A(t)  \in \mathbb{R}^{n}$ is defined as
%$(u(t),y(t)) \in \mathbb{R}^{n} \times \mathbb{R}^{n}$ is the input/output port and $A(t) \in \mathbb{R}^{n}$ is defined as
\begin{equation} \label{eq:modulation matrix description}
    A(t) = \frac{\gamma(t)}{x_{t}(t)}
\end{equation}
and $\gamma(t) \in \mathbb{R}^{n}$ is the desired value for the output $\dot{x}_{des}(t)$, i.e. the velocity implementing the desired admittance. In fact, if we plug \eqref{eq:tank modulation} into \eqref{eq:tank description} we get
\begin{equation} \label{eq:tank modulated description}
    \begin{cases}
        \dot{x}_{t} = A^{T}(t)F_e(t)\\
        \dot{x}_{des} = A(t)y_{t}(t)= \gamma(t)
    \end{cases}
\end{equation}
which implies that any desired port behavior can be obtained by an appropriate modulation of the energy exchanged with the tank. From \eqref{eq:tank description},\eqref{eq:tank energy} and \eqref{eq:tank modulated description} it follows that
\begin{equation}\label{eq:tankadmbalance}
\dot{T}=u_t y_t=A^T (t)F_e y_t=\gamma^T F_e
\end{equation}

which clearly shows that the energy needed for implementing the desired behavior is extracted from/injected into the tank.

The modulated tank \eqref{eq:tank modulated description} faces a singularity whenever $x_{t}(t)=0$ because of the definition of the modulation factor in \eqref{eq:tank modulated description}. The singularity happens when the  tank is empty and no behavior can be implemented using the energy stored in the tank \cite{Ferraguti2015TRO}. In order to avoid this condition, it is necessary to  initialize $x_{t}$ such that $T(x_{t}(0)) \geq \underline \varepsilon > 0$ and to ensure that $T(x_{t}(t)) \geq \underline \varepsilon$ $\forall t>0$. %Whenever the energy necessary for implementing a behavior exceeds the current energy in the tank, only an approximation of the given behavior can be implemented. Different solutions to solve this issue have been proposed in \cite{Riggio2018}, \cite{Ferraguti2015TRO} and \cite{Franken2011TRO} .\\
%If an excessive amount of energy is stored in the tank, practically unstable behaviors may end up being implemented. For this reason, it is necessary to establish an upper energy bound $\overline{\varepsilon} > \underline{\varepsilon} > 0$ above which the tank saturates at $T(x_{t}(t)) = \overline{\varepsilon}$. The choice for this upper value is application dependant.\\
As shown in  \cite{Secchi19ICRA}, the following result holds: 
\newtheorem{prop}{Proposition}
 \begin{prop} \label{prop: always passive}
 	If $T(x_{t})\geq \underline{\varepsilon}$ for all $t\geq0$, then the modulated tank \eqref{eq:tank modulated description} remains passive independently of the desired output $\gamma(t)$.
 \end{prop}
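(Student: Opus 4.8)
The plan is to verify directly that the modulated tank satisfies the dissipation inequality associated with passivity, using $T(x_t)$ from \eqref{eq:tank energy} as the storage function and $(F_e,\dot x_{des})$ as the power port. First I would recall that a system is passive with respect to a port $(u,y)$ if there exists a storage function $S \ge 0$, well defined along all trajectories, such that $\dot S \le u^\top y$ along the dynamics; here the natural candidate is $S = T(x_t) = \tfrac12 x_t^2$, which is non-negative by construction.

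The key preliminary observation is that the hypothesis $T(x_t) \ge \underline\varepsilon > 0$ for all $t\ge 0$ is exactly what guarantees that the modulated dynamics \eqref{eq:tank modulated description} is well posed: it forces $|x_t(t)| \ge \sqrt{2\underline\varepsilon} > 0$, so the modulation factor $A(t) = \gamma(t)/x_t(t)$ in \eqref{eq:modulation matrix description} never hits the singularity at $x_t=0$ and stays finite for every choice of $\gamma(t)$. Without this bound the interconnection \eqref{eq:tank modulation}, and hence the power balance, would not be defined.

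Then I would simply differentiate $T$ along \eqref{eq:tank modulated description}: $\dot T = x_t\dot x_t = x_t A^\top(t)F_e(t) = x_t\,\tfrac{\gamma^\top(t)}{x_t(t)}\,F_e(t) = \gamma^\top(t)F_e(t)$, which is precisely the balance \eqref{eq:tankadmbalance}. Since by \eqref{eq:tank modulated description} the port output is $\dot x_{des} = \gamma(t)$, this reads $\dot T = \dot x_{des}^\top F_e = F_e^\top\dot x_{des}$. Thus the dissipation inequality holds with equality, i.e. the modulated tank is lossless and therefore passive, and nothing in the computation depends on the particular profile $\gamma(t)$ — it only requires $x_t \neq 0$, which the hypothesis secures.

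I do not expect a genuine obstacle here: the one-line energy balance follows immediately from \eqref{eq:tank modulated description} and \eqref{eq:tank energy}. The only point to be careful about is not to treat the claim as a purely algebraic identity — the substantive content is the well-posedness argument, namely that $\underline\varepsilon$ uniformly bounds $x_t$ away from zero so that $A(t)$ is defined on all of $[0,\infty)$, which is what makes the passivity statement meaningful rather than formal.
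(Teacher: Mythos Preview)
Your argument is correct and is essentially the same as the paper's: the paper does not spell out a proof here (it defers to \cite{Secchi19ICRA}) but it has already derived the key energy balance \eqref{eq:tankadmbalance}, $\dot T = \gamma^T F_e = \dot x_{des}^T F_e$, which together with the non-negativity of $T$ and the well-posedness of $A(t)$ under $T(x_t)\geq\underline{\varepsilon}$ is exactly your proof.
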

% \begin{proof}
%     From \eqref{eq:tank description} we have that $\dot T(t)= u_{t}(t)^{T}y_{t}(t)$. Using \eqref{eq:tank modulation} we get that:
%     \begin{equation} \label{eq: tank energy port exch}
%         \dot{T}(x_{t}(t)) = A^{T}(t)u(t)y_{t}(t) = x_{t}(t)A^{T}(t)u(t) = y^{T}(t)u(t)
%     \end{equation}
% 	in which, given that $T(x_{t}(t)) \geq \underline{\varepsilon}$ for all $t\geq0$, $A(t)$ is well posed.	From \eqref{eq: tank energy port exch} we have that
% 	\begin{equation}
% 	    \int_{0}^{t}u^{T}(\tau)y(\tau)d\tau = T(x_{t}(t))-T(x_{t}(0)) \geq -T(x_{t}(0))
% 	\end{equation}
% 	which guarantees the passivity of \eqref{eq:tank modulated description}, thus concluding the proof.
% 	\end{proof}
Thus, as long as the tank is not depleted, any desired port behavior can be passively implemented by modulating the energy stored in the tank.
Furthermore, as shown in \cite{Giordano2013IJR} for the single port case and in \cite{Riggio2018} for the multi-port case, any passive dynamics can be reproduced without depleting the energy tank. Thus, modulated energy tanks can be used as a generalized admittance control since they allow to reproduce any physical dynamics.
% If the tank is initialized with an adequate energy level, it can be used to reproduce any desired behavior in a passive way without ever depleting it, thus avoiding singularities in \eqref{eq:modulation matrix description}. Consider $\mathcal{P}$ to be a system passive with respect to the port $(u_{P}(t), y_{P}(t))$ using the lower bounded storage function $P(t)$. If we want to reproduce the port behavior of $\mathcal{P}$, all we have to do is excite \eqref{eq:tank modulated description} with $u_{P}(t)$ and set $\gamma(t) = y_{P}(t)$ in \eqref{eq:modulation matrix description} and the following proposition holds.
% \newtheorem{prop2}{Proposition}
% \begin{prop} \label{prop: never deplete}
% 	If $x_{t}(0)$ is chosen such that $T(x_{t}(0))\geq P(0) + \underline{\varepsilon}$, then $T(x_{t}(t)) \geq \underline{\varepsilon}$ for all $t\geq0$. 
% \end{prop}
% The proof for the proposition can be found in \cite{Giordano2013IJR} for the single port case and in \cite{Riggio2018} for the multi-port one.\\
%Intuitively speaking, if a system is passive, the maximum amount of energy which can be extracted is the initially stored energy. It is thus sufficient that the initial energy value in the tank exceeds this value to guarantee that the tank will never deplete.\\

In case we aim at reproducing a non passive admittance dynamics, i.e. the time varying dynamics in \eqref{eq:tvaradm}, we can exploit Prop.~\ref{prop: always passive} for enforcing a passive implementation of the desired behavior. In fact, it is possible to guarantee the passivity of the modulated tank by enforcing the following constraint:

 \begin{equation}\label{eq: passivity constraint}
     T(x_{t}) \geq \underline{\varepsilon} \quad \forall t\geq 0
 \end{equation}
Thus, as shown in \cite{Secchi19ICRA}, it is possible to find the best passive implementation of any desired admittance dynamics by solving the following optimization problem

%  and inserted into a convex optimization problem. At each time instant, the desired admittance $\dot{x}_{adm}$ is computed using the model \eqref{eq: admittance model}, and is passed as an input to an optimizer. The optimizer then solves the following optimization problem
 \begin{equation} \label{eq: opt problem ICRA19}
\begin{aligned}
& \underset{\dot{x}_{des}}{\text{minimize}}
& & ||\dot{x}_{des} - \dot{x}_{a}||^{2} \\
& \text{subject to}
& & \int_{0}^{t}F_{e}^{T}(\tau)\dot{x}_{des}(\tau)d\tau\geq - T(x_{t}(0)) + \underline{\varepsilon}
\end{aligned}
\end{equation}
 The problem \eqref{eq: opt problem ICRA19} allows to find
the best passive approximation of the desired behavior $\dot{x}_{a}$, taking into account the amount of energy stored in the tank. The solution $\dot{x}_{des}$  is then utilized to correctly tune the modulation matrix $A(t)$ in \eqref{eq:tank modulated description}. In this way, a passive energy balance is guaranteed even if the admittance parameters change online.\\
 In \cite{Secchi19ICRA}, a discrete time version of the problem was introduced, which turns the integral constraint in \eqref{eq: opt problem ICRA19} into a linear one. The resulting optimization problem is then convex, which makes it computationally fast and simple in formulation, thus suitable for being executed in real time and embedded into a larger optimization-based framework.
\section{Control Barrier Functions based architecture}
\label{sec:CBF}

In this section we show how to exploit CBFs for enforcing multiple time-varying dynamic constraints on a robot described by \eqref{eq:robot kinematic model}. The presented framework leverages the techniques exposed in \cite{Notomista2018CoRR}, \cite{Notomista2019CoRR}, where 
both kinematic limits and desired tasks to be executed by the robot can be represented as time-varying constraints on the control input. Thus, it is possible to model tasks and physical limitations  as constraints to be fulfilled by the controlled system and CBFs are exploited for formulating an optimization problem for finding the best input that satisfies all the constraints, e.g. that can lead to the optimal execution of all the tasks.

%As introduced in Sec. \ref{sec:problem}, our goal is to guarantee the satisfaction of multiple time-varying constraints, such that the overall safety of the application is ensured. We introduce the overall execution framework, which leverages techniques presented in \cite{Notomista2018CoRR} and \cite{Notomista2019CoRR}. Here, a \textit{constraint-oriented} approach was adopted, in which a desired task to be accomplished is represented as a constraint on the nominal input $u_{nom}$ of the robot. Then, the control input $u$ is synthesized at each point in time by solving an optimization problem.

We consider tasks whose execution can be expressed as the minimization of a non negative, possibly time-varying, continuously-differentiable cost function $C : \mathbb{R}^{n} \times \mathbb{R} \rightarrow \mathbb{R}$. Using the robot model in \eqref{eq:robot kinematic model} and considering as an output variable the time-varying task variable $\sigma \in \mathbb{R}^{n}$, this can be described by the following optimization problem:

\begin{equation} \label{eq: prob cost function}
\begin{aligned}
& \underset{u}{\text{minimize}}
& & C(\sigma,t) \\
& \text{subject to}
& & \dot{x} = J (q)u \\
&&& \sigma = k(x,t)
\end{aligned}
\end{equation}
The optimization problem can be reformulated and made computationally simple using Control Barrier Functions. 
Let $\mathcal{C}\subset \mathbb{R}^n$ be the subset where the task is considered executed, i.e. where $C(\sigma,t)=0$. Consider the a control barrier function $h:\mathbb{R}^{n} \times \mathbb{R} \rightarrow \mathbb{R}$ defined as $h(\sigma, t)=-C(\sigma,t)$. By construction $h$ is non negative only in the region of satisfaction of the task, i.e. when $C(\sigma,t)=0$. Thus, enforcing the non negativity of $h$ is equivalent to enforcing the execution of the task $\sigma$. This can be encoded in the following convex optimization problem \cite{Notomista2018CoRR}:
\begin{equation} \label{eq: prob CBF general}
\begin{aligned}
& \underset{\dot{q}}{\text{minimize}}
& & ||\dot{q}||^{2} \\
& \text{subject to}
& & \frac{\partial h}{\partial t} + \frac{\partial h}{\partial \sigma}\frac{\partial \sigma}{\partial x}J(q)\dot{q} + \alpha(h(\sigma,t)) \geq 0
\end{aligned}
\end{equation}
where $\alpha(\cdot)$ is an extended class $\mathcal{K}$ function\footnote{An extended class $\mathcal{K}$ is a function $\phi:\mathbb{R}\rightarrow\mathbb{R}$ such that $\phi$ is strictly increasing and $\phi (0)=0$} and where we have considered that the input of \eqref{eq:robot kinematic model} is $u=\dot q$. 
% These constraints can be enforced using Control Barrier Functions. These functions are particularly suitable for constraining the robot state into a given subset, since they ensure both the set forward invariance and asymptotic stability.\\
% According to what was proven in \cite{notomista2020settheoretic} in fact, we can consider the set $\mathcal{C} \subset \mathcal{D} \subset \mathcal{T}$, where $\mathcal{D}$ is a domain, to be the super-level set of a continuously differentiable function $h : \mathcal{D} \times \mathbb{R}_{\geq0} \rightarrow \mathbb{R}$. If $h$ is a CBF on $\mathcal{D}$, then we can proceed in a similar fashion to \cite{Notomista2018CoRR} and find an input $u(t) = \dot{q}$ equivalent to that obtained through \eqref{eq: prob cost function} by solving the following constrained optimization problem:
% \begin{equation} \label{eq: prob CBF general}
% \begin{aligned}
% & \underset{\dot{q}}{\text{minimize}}
% & & ||\dot{q}||^{2} \\
% & \text{subject to}
% & & \frac{\partial h}{\partial t} + \frac{\partial h}{\partial \sigma}\frac{\partial \sigma}{\partial x}J(q)\dot{q} + \gamma(h(\sigma,t)) \geq 0
% \end{aligned}
% \end{equation}
% where $h = -C(\sigma,t)$ is an Output Time-Varying Control Barrier Function.\\ 
The formulation can easily be extended in order to enable the simultaneous execution of multiple tasks. Let us consider a set of M different tasks $T_{1}, \dots , T_{M}$ which have to be executed, each encoded respectively by the cost functions $C_{1}, \dots, C_{M}$. The execution of these tasks can then be accomplished by solving the following convex optimization problem:
%\begin{equation} \label{eq: prob CBF multitask}
%\begin{aligned}
%& \underset{u}{\text{minimize}}
%& & ||u||^{2} \\
%& \text{subject to}
%& & \frac{\partial h_{m}}{\partial t} + \frac{\partial h_{m}}{\partial \sigma}\frac{\partial \sigma}{\partial x}f(x) + \frac{\partial h_{m}}{\partial \sigma}\frac{\partial \sigma}{\partial x}g(x)u \\
%&&& + \gamma(h_{m}(\sigma,t)) \geq 0 \quad \forall m \in  \{ 1, \dots, M\}
%\end{aligned}
%\end{equation}
%in which $h_{m}(\sigma,t) = -C_{m}(\sigma,t)$. However, this formulation doesn't guarantee the existence of the problem's solution, since there might be tasks conflicting with each others in the stack. To relax the constraints, we can then introduce a slack variable $\delta m$ for each task, which indicates how much the the constraint corresponding to the task $T_{m}$ can be relaxed to in order to guarantee the feasibility of the problem. The optimization problem can then be modified as follows:
\begin{equation} \label{eq: prob CBF multitask slack}
\begin{aligned}
& \underset{\dot{q}, \delta}{\text{minimize}}
& & ||\dot{q}||^{2} + l||\delta||^{2} \\
& \text{subject to}
& & \frac{\partial h_{m}}{\partial t} + \frac{\partial h_{m}}{\partial \sigma}\frac{\partial \sigma}{\partial x}J(q)\dot{q} \\
&&& + \alpha (h_{m}(\sigma,t)) \geq -\delta_{m} \quad  m \in  \{ 1, \dots, M\}
\end{aligned}
\end{equation}
in which $h_{m}(\sigma,t) = -C_{m}(\sigma,t)$ and $\delta = [\delta_{1},\dots,\delta_{M}]^{T}$ is the vector of slack variables corresponding to each constraint, while $l \geq 0$ is a scaling factor. Each $\delta_{i}$ indicates how much the constraint corresponding to the $i-$th task can be relaxed, in order to guarantee the feasibility of the problem even if conflicting constraints are active at the same time.

\section{Collaborative Constraint-Oriented Control Architecture}
\label{sec:admittance}

In this section we propose a collaborative constraint-oriented control architecture that merges the flexible energy management architecture illustrated in Sec.~\ref{sec:admittance} with the CBFs based architecture for task execution shown in Sec.~\ref{sec:CBF} in order to obtain a control strategy that guarantees a maximally flexible and robustly stable behavior of the robot. We will keep the exposition of the algorithm in continuous time, but the overall discussion can be easily extended to discrete time (see \cite{Secchi19ICRA} in particular for the constraint on passivity).

% We aim at merging this control architecture with the previously presented tank-based formulation, in order to include the passivity preservation task into the stack, thus allowing us to change online the admittance parameter. As shown in \cite{Secchi19ICRA}, in fact, passivity can be considered as a constraint which has to be satisfied by any time-varying dynamic in  order to achieve a robust interaction. This is guaranteed if the implemented dynamic satisfies the inequality \eqref{eq: passivity constraint}, since this allows the tank to never deplete, avoiding singularity.\\
% For the sake of simplicity, here we will keep the exposition of the algorithm in continuous time, but the overall discussion can be easily extended to discrete time (see \cite{Secchi19ICRA} in particular for the constraint on passivity).\\

It is possible to enforce a passive behavior of the robot controlled using control barrier functions by inserting the passivity constraint of \eqref{eq: opt problem ICRA19} in \eqref{eq: prob CBF multitask slack} and by using the output of the optimization problem for modulating the matrix $A(t)$ in \eqref{eq:tank modulated description}. This leads to:

%As previously stated, the implemented dynamic is passive if $T(x_{t}(t)) \geq \underline{\varepsilon}$, which can be reformulated according to our kinematic model \eqref{eq:robot kinematic model} and inserted in \eqref{eq: prob CBF multitask slack} as:
\begin{equation} \label{eq: prob CBF multitask slack prio passive}
\begin{aligned}
& \underset{\dot{q}, \delta}{\text{minimize}}
& & ||\dot{q}||^{2} + l||\delta||^{2} \\
& \text{subject to}
& & \frac{\partial h_{m}}{\partial t} + \frac{\partial h_{m}}{\partial \sigma}\frac{\partial \sigma}{\partial x}J(q)\dot{q} \\
&&& + \alpha(h_{m}(\sigma,t)) \geq -\delta_{m} \quad  m \in  \{ 1, \dots, M\}\\
&&& \int_{0}^{t}F_{e}^{T}(\tau)J(q)\dot{q}(\tau)d\tau \geq -T(x_{t}(0)) + \underline{\varepsilon}\\
\end{aligned}
\end{equation}
and the tank is modulated by setting $\gamma=J(q)\dot{q}$.

Notice that, since the passivity constraint is linear, the overall optimization problem maintains its convexity. Furthermore, the passivity constraint in \eqref{eq: prob CBF multitask slack prio passive}   does not present a dedicated slack variable to relax it. This is due to the fact that the preservation of passivity is of top priority as it is connected to robust stability. At the same time, this choice does not over-constrain the system during free motion, as demonstrated in the following proposition.

%This is in fact intentional: we aim at posing the preservation of passivity as a top priority whenever an interaction happens and relaxing it excessively could lead to the depletion of the tank, thus to the implementation of unstable behaviors. At the same time, we are not over-constraining the system during free motion, as demonstrated in the following preposition.

\newtheorem{prop3}{Proposition}
\begin{prop} \label{prop: passivity constraint in free motion}
	If $T(x_{t}(t))\geq \underline{\varepsilon}$ during the interaction, then the passivity constraint in \eqref{eq: prob CBF multitask slack prio passive} is automatically satisfied also in free motion.
\end{prop}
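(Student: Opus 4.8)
The plan is to rewrite the passivity constraint appearing in \eqref{eq: prob CBF multitask slack prio passive} as an equivalent lower bound on the instantaneous tank energy $T(x_t(t))$, and then to observe that during free motion the interaction force vanishes, so the tank energy cannot fall below the level it had when contact was last broken.

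First I would invoke the tank energy balance. Since the tank is modulated with $\gamma = J(q)\dot q$, equation \eqref{eq:tankadmbalance} gives $\dot T = \gamma^T F_e = F_e^T J(q)\dot q$, and integrating from $0$ to $t$ yields $T(x_t(t)) = T(x_t(0)) + \int_0^t F_e^T(\tau) J(q)\dot q(\tau)\,d\tau$. Substituting this identity into the passivity constraint shows that $\int_0^t F_e^T(\tau) J(q)\dot q(\tau)\,d\tau \geq -T(x_t(0)) + \underline{\varepsilon}$ holds if and only if $T(x_t(t)) \geq \underline{\varepsilon}$. Hence it suffices to prove that $T(x_t(t)) \geq \underline{\varepsilon}$ at every instant of free motion.

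Then I would partition the time axis into the disjoint intervals of contact and of free motion. Along a free-motion interval the interaction force is identically zero, so $\dot T = 0$ and the tank energy remains constant and equal to its value at the instant the robot left contact --- or equal to $T(x_t(0))$ if contact has not yet occurred. By hypothesis $T(x_t) \geq \underline{\varepsilon}$ throughout every contact phase, hence, by continuity of $t \mapsto T(x_t(t))$, also at the end of each contact phase; combined with the initialization $T(x_t(0)) \geq \underline{\varepsilon}$, this yields $T(x_t(t)) \geq \underline{\varepsilon}$ at every free-motion instant. By the equivalence established above, the passivity constraint is then automatically satisfied in free motion, which is the claim.

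I expect the only subtlety to be the bookkeeping at the switching instants between contact and free motion (and the degenerate case in which $t$ precedes any contact), since the integral in the constraint accrues only over the contact sub-intervals; everything else reduces to the elementary tank-balance identity above.
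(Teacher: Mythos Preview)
Your proof is correct and rests on the same observation as the paper's: during free motion $F_e=0$, so the integral in the passivity constraint (equivalently, the tank energy) does not change, and the bound inherited from the last contact instant persists. The paper's version is terser---it simply splits the integral at the switching time $\bar t$ and drops the vanishing free-motion part---whereas you route the argument through the tank-balance identity and handle multiple contact/free-motion intervals, but the core idea is identical.
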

\begin{proof}
    Assume that free motion starts at time $\overline{t}>0$. We can then subdivide the integral term of the passivity constraint in \eqref{eq: prob CBF multitask slack prio passive} in two parts:
    \begin{multline}\label{eq: passivity constraint time-divided}
        \int_{0}^{\overline{t}}F_{e}^{T}(\tau)J(q)\dot{q}(\tau)d\tau +\int_{\overline{t}}^{t}F_{e}^{T}(\tau)J(q)\dot{q}(\tau)d\tau \geq\\ \geq  -T(x_{t}(0)) + \underline{\varepsilon}
    \end{multline}
    Since $F=0$ after $\overline{t}$, \eqref{eq: passivity constraint time-divided} reduces to
    \begin{equation} \label{eq: passivity constraint reduced}
        \int_{0}^{\overline{t}}F_{e}^{T}(\tau)J(q)\dot{q}(\tau)d\tau \geq -T(x_{t}(0)) + \underline{\varepsilon}
    \end{equation}
    which is equivalent to the constraint active during the interaction phase, thus concluding the proof.
\end{proof}
\begin{figure}%[tb]
	\centering
	\includegraphics[width=\columnwidth]{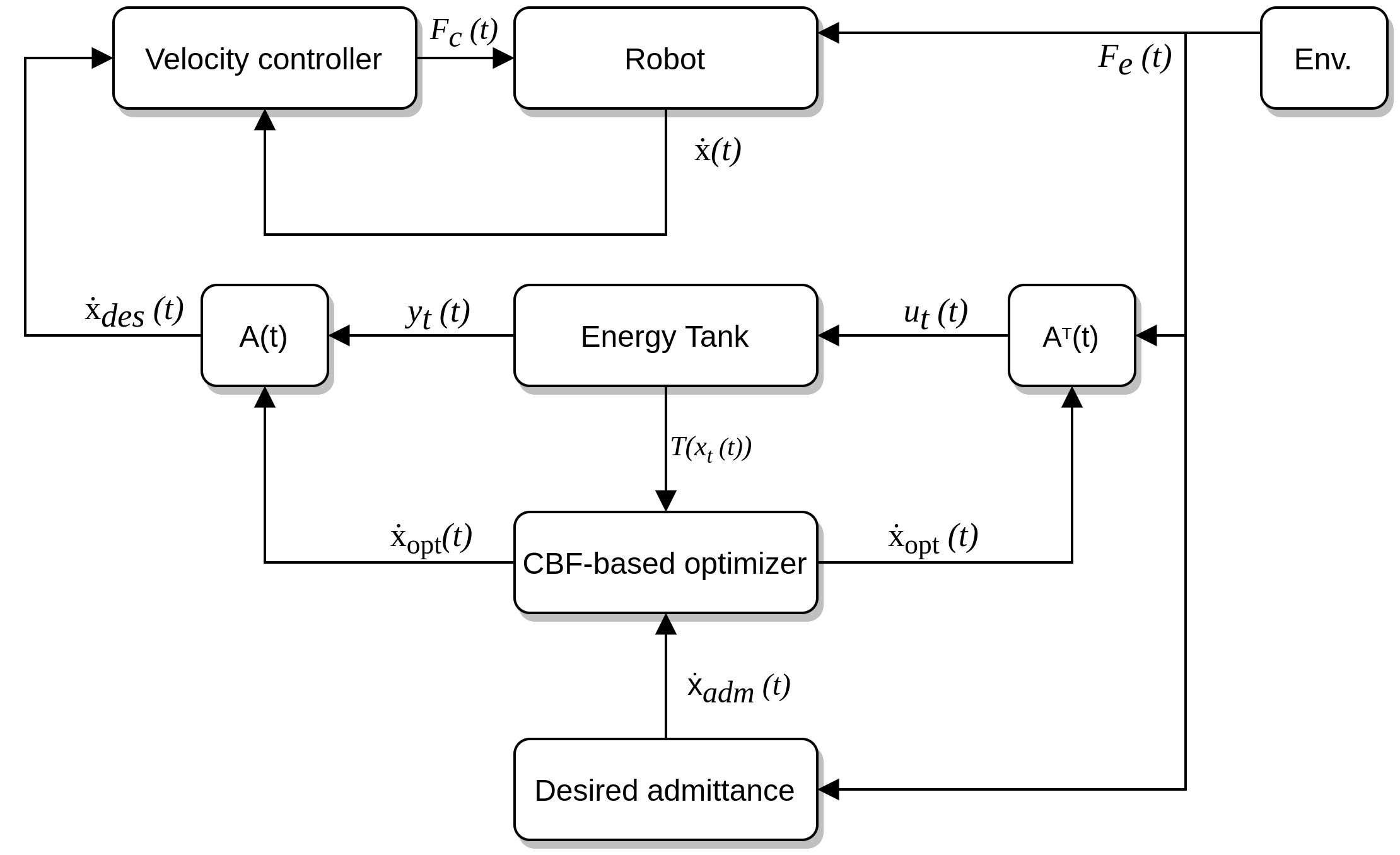}
	\caption{The final collaborative constrained-control architecture.}
	\label{fig:full architecture}
\end{figure}
Hence, the passivity constraint is automatically satisfied during free motion, meaning that the optimizer disregards it completely during this phase, focusing on satisfying the remaining $M$ constraints in the stack.\\
Following this procedure, we have therefore obtained a control architecture which is capable not only of accomplishing concurrent constraints in a flexible way but also to guarantee a robust behavior while interacting with a poorly known environment, thanks to the preservation of passivity. The developed framework acts as a sort of "armor", protecting the robot against unstable behaviors during the interaction, while still allowing it to implement all the encoded tasks at the best of its dexterity. Finally, everything is encompassed into a single convex optimization problem, for a fast and efficient resolution.

In the same fashion, we can insert the desired admittance into our own optimization problem, by properly modifying the previous objective function in \eqref{eq: prob CBF multitask slack prio passive} as follows:
\begin{equation} \label{eq: prob CBF multitask slack prio passive adm}
\begin{aligned}
& \underset{\dot{q}, \delta}{\text{minimize}}
& & ||\dot{q} - \dot{q}_{a}||^{2} + l||\delta||^{2} \\
& \text{subject to}
& & \frac{\partial h_{m}}{\partial t} + \frac{\partial h_{m}}{\partial \sigma}\frac{\partial \sigma}{\partial x}J(q)\dot{q} \\
&&& + \alpha(h_{m}(\sigma,t)) \geq -\delta_{m} \quad  m \in  \{ 1, \dots, M\}\\
&&& \int_{0}^{t}F_{e}^{T}(\tau)J(q)\dot{q}(\tau)d\tau \geq -T(x_{t}(0)) + \underline{\varepsilon}\\
\end{aligned}
\end{equation}
 in which $\dot{q}_{a} = J(q)^{+}\dot{x}_{a}$ is the desired admittance expressed in the joint space.
 In this way, we have encompassed the whole variable admittance control scheme into our task execution architecture, allowing for a flexible interaction with the environment, while still ensuring the overall passivity via the energetic constraint, as well as satisfying the other time-varying constraints in the stack.

The implementation of the desired admittance can be enforced at the price of relaxing the tasks encoded by the CBFs. A possible strategy for achieving this goal is to dynamically weight the first term in the function to optimize in \eqref{eq: prob CBF multitask slack prio passive adm}. A possible solution is to use the following metric matrix $W(t)=(1+\kappa \|F_e(t)\|^2)I_n$, where $I_n$ is the identity matrix of order $n$ and $\kappa>0$ that can be used for tuning the effect of the external force. Thus, replacing $\|\dot{q}-\dot{q}_a\|^2$ in \eqref{eq: prob CBF multitask slack prio passive adm} with 
\begin{equation}\label{eq:wnorm}
\|\dot{q}-\dot{q}_a\|^2_W(t)=(\dot{q}-\dot{q}_a)^TW(t)(\dot{q}-\dot{q}_a)
\end{equation}
allows to penalize the deviation from the desired admittance dynamics the more, the higher the interaction force is. In free motion \eqref{eq:wnorm} reduces to the standard euclidean norm used in \eqref{eq: prob CBF multitask slack prio passive adm}.

 %Furthermore, we want to control how the constraints of the stack affect the implementation of the desired admittance dynamics and vice-versa. In our case, the goal is to weight more the admittance contribution as the force becomes significant, i.e. we want to more closely respect the admittance dynamics the larger the force acting is, even at the cost of eventually violating certain non-safety-critical constraints.\\
 %This can be achieved by the introduction of a penalty matrix $W \in \mathbb{R}^{6\times6}$, $W^{T} > 0$, $W = diag(w_{i})$, whose elements $w_{i} = \kappa \,||F_{e}||$ are directly proportional to the Euclidean norm of the external force. Then, instead of the standard $L^{2}$ norm, in our objective function we consider the norm with respect to the penalty matrix $W$, thus proportionally weighting the admittance contribution according to the intensity of the interaction force. 
 
 The final formulation of the optimization problem is as follows:
\begin{equation} \label{eq: prob CBF multitask slack prio passive adm penalty}
\begin{aligned}
& \underset{\dot{q}, \delta}{\text{minimize}}
& & ||\dot{q} - \dot{q}_{adm}||^{2}_{W} + l||\delta||^{2} \\
& \text{subject to}
& & \frac{\partial h_{m}}{\partial t} + \frac{\partial h_{m}}{\partial \sigma}\frac{\partial \sigma}{\partial x}J(q)\dot{q} \\
&&& + \alpha(h_{m}(\sigma,t)) \geq -\delta_{m} \quad  m \in  \{ 1, \dots, M\}\\
&&& \int_{0}^{t}F_{e}^{T}(\tau)J(q)\dot{q}(\tau)d\tau \geq -T(x_{t}(0)) + \underline{\varepsilon}\\
\end{aligned}
\end{equation}
The overall control architecture is reported in Fig.~\ref{fig:full architecture}, in which we defined $\dot{x}_{opt} = J(q)\dot{q}$ as the optimal velocity in the task space.

The velocity controlled robot is wrapped by the modulated tank that is exploited for ensuring the passivity of the controlled system. The CBF-based optimizer receives as an input the desired admittance velocity $\dot x_a$ and, by solving \eqref{eq: prob CBF multitask slack prio passive adm penalty} determines the best value $\dot{x}_{opt}$ for passively implementing the desired admittance behavior and all the tasks that are encoded by the  CBFs. The matrix $A(t)$ is modulated by setting $\gamma=\dot{x}_{opt}$.

%\begin{center}
%%%%%	\label{fig:full architecture}
	%\end{figure}
%\end{center}

\section{Experiments}
\label{sec:experiments}
%\begin{itemize}
    %\item Description of the robotic system
    %\item Description of the experiment
    %\item Experimental Results
%\end{itemize}
The framework proposed in this paper has been validated on a  Universal Robot 10e manipulator, equipped with an on-board 6-axis force/torque (F/T) sensor. Both the robot and the sensor run with a sampling time of \textit{2ms}.\\
%The formulation of the kinematic model of the robot follows the one proposed in \cite{Sorour2019ECMR}, in which $q = [q_{b}^{T}\, q_{a}^{T}]^{T}$ represents the configuration space coordinates of the mobile manipulator, $q_{a} \in \mathbb{R}^{6}$ are the generalized coordinates of the arm manipulator and $q_{b}$ is the 6D pose of the mobile base. Moreover, we construct the kinematic mapping between the velocity of the end effector $\dot{p} \in \mathbb{R}^{6}$ expressed with respect to the mobile base and the velocity of the configuration space coordinates $\dot{q}$ with:
%\begin{equation}\label{eq: kinematic mapping}
 %   \dot{p} = J_{mm}\dot{q}
%\end{equation}
%in which $J_{mm}$ is the augmented Jacobian, defined as
%\begin{equation}\label{eq: augm Jacobian}
 %   J_{mm} = \mathbb{M}_{RPY}[\mathbb{H}\,J_{b} \quad J_{a}]
%\end{equation}
%in which $J_{b}$ is the Jacobian of the mobile base, $J_{a}$ the Jacobian of the manipulator, $\mathbb{H}$ is a velocity mapping term and $\mathbb{M}_{RPY}$ is the transformation matrix for RPY angles.\\
The robot is employed to accomplish a set of tasks, including obstacle avoidance, joint control and position control of the end effector, as well as the satisfaction of the passivity constraint and the implementation of a variable admittance dynamics. Each task is encoded using a specific CBF, following the formulations exposed in Sec.~\ref{sec:CBF}.\\
The obstacle avoidance task is encoded by the following CBF:
\begin{equation}\label{eq: safety CBF}
    h_{safe} = -\xi_{s}(d^{2} - D_{min}^{2})
\end{equation}
in which $\xi_{s} = 10$ and $D_{min}$ is the minimum distance value between the obstacle and the tip of the end-effector, while $d = d(x, t)$ is defined as
\begin{equation}\label{eq: distance definition}
    d = ||x - x_{obs}||^{2}
\end{equation}
in which $x_{obs}$ is the Cartesian position of the closest obstacle.\\
Following the optimization-based formulation \eqref{eq: prob CBF multitask slack prio passive adm penalty}, for each CBF the correspondent gain $\xi_{(.)}$ is tuned and, after simple computations, the resulting constraint is obtained and inserted into the optimization problem. For the sake of simplicity, we chose the function $\alpha$ to be the identity function, such that $\alpha(h_{m}(\sigma,t)) = h_{m}(\sigma,t) \, \forall m \in \{ 1,\dots, M\}$. In the case of \eqref{eq: safety CBF}, for example, the resulting constraint is
\begin{equation}\label{eq: safety CBF constraint}
    2(d - D_{min})J(q)\dot{q} \geq -(h_{safe}(x,t)) + \delta_{s}
\end{equation}
in which $\delta_{s}$ is the dedicated slack variable.
The joint control task consists in maintaining the joint variables within an upper and lower limit. For each $i-$th joint, a dedicated CBF has been implemented as:
\begin{equation}\label{eq: joint limit CBF}
     h_{lim_{i}} = \xi_{l}\frac{(q^{+}_{i} - q_{i})(q_{i}- q^{-}_{i})}{(q^{+}_{i} - q^{-}_{i})} \quad i \in  \{ 1, \dots, n\}
\end{equation}
in which $\xi_{l} = 1$, while $q^{+}_{i}$ and $q^{-}_{i}$ represent the real joint limits of the $i$-th joint, in the joint space. Observe how ensuring that $h_{lim_{i}} \geq 0$ is equivalent to keeping the value of $q_{i} \in [q^{+}_{i}, q^{-}_{i}]$. 
\\
Finally, the position control task is encoded through the following CBF:
\begin{equation}\label{eq: pos control CBF}
    h_{pos} = -\xi_{p}||x - x_{goal}||^{2}
\end{equation}
in which $\xi_{p} = 5$ and $x_{goal}$ is the desired Cartesian position of the end effector.\\ 
Additionally, the desired admittance dynamics presents a time-varying repulsive potential $P(x,t)$ which is centered in the current position of the obstacle. The generated force $\frac{\partial P}{\partial x}(t)$ grows as the robot approaches the obstacle, such that the human can perceive an hazardous area during the interaction. The formulation of the resulting force is as follows
\begin{equation} \label{eq: repulsive potential}
    \frac{\partial P}{\partial x}(t) = 
    \begin{cases}    
    K_{rep}\Bigl( \frac{1}{d(p,t)} - \frac{1}{D^{*}} \Bigr) \frac{p(t) - p_{obs}(t)}{d^{3}(p,t)} \,\,\, if \, d(p,t)<D^*\\
    0 \quad otherwise
    \end{cases}
\end{equation}
in which $D^{*} > D_{min}$ is the activation distance for the potential. As seen before, such a behavior is non-passive, and energy needs to be extracted from the tank for preserving a passive energy balance.\\
A series of experiments have been conducted, in order to concurrently validate each component of the architecture.\\
First, the robot is moved to a desired goal using \eqref{eq: pos control CBF}. Along the way, the operator interacts with it, moving it around in the workspace. As soon as the interaction stops, the robot switches to free-motion and resumes its motion towards the goal, unaffected by the previous interaction. Fig. \ref{fig:h_pos vs Fe} shows the evolution of \eqref{eq: pos control CBF} over time, together with the intensity of the force acting on the system.\\
Secondly, the same position control task is implemented, this time with the presence of a time-varying obstacle moving in the workspace. As visible in Fig. \ref{fig:h_pos vs d}, as soon as the distance $d$ approaches the minimum value $D_{min} = 0.25m$, the constraint forces the robot to move away from the obstacle, preserving the overall safety. Once the obstacle is removed, the robot resumes its motion towards the goal.\\
Finally, the passivity of the interaction is validated using a variable admittance control. The admittance model encompasses a constant mass and damping term $M(x_{a})$ and $D(x_{a})$, as well as the time-varying potential in \eqref{eq: repulsive potential}. In particular, the desired inertia is chosen as a diagonal matrix with elements equal to $0.75kg$ and $0.25kg$ for translations and rotations respectively, while the damping is a diagonal matrix with constant elements equal to $0.05\frac{Ns}{m}$ (translations) and $0.025\frac{Ns}{m}$ (rotations).\\
During the experiment, the operator tries to guide the robot towards the obstacle. As the value of $d$ drops below $D^*=0.5m$, the operator feels the repulsive force \eqref{eq: repulsive potential} generated by the potential. Fig. \ref{fig:tank evolution} shows the evolution of the energy in the tank: as soon as the current value of $T(x_{t}(t))$ approaches $\underline{\varepsilon} = 0.1$, the constraint \eqref{eq: passivity constraint} ensures that the tank is never depleted by implementing a passive approximation of the desired behaviour.\\
Additional graphs are shown in the accompanying video, in which the three experiments are portrayed.
\begin{figure}%[tb]
	\centering
	\includegraphics[width=\columnwidth]{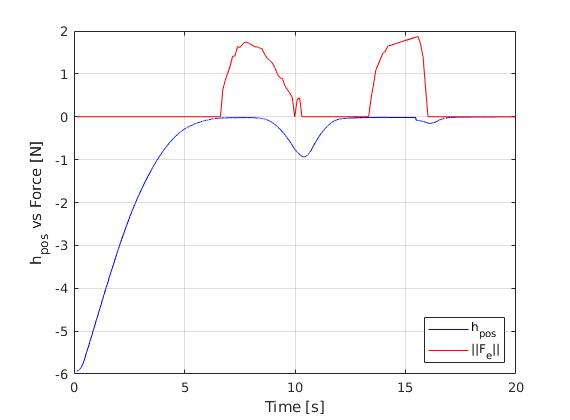}
	\caption{Value of the CBF $h_{pos}$ over time and norm of the force measured by the F/T sensor.}
	\label{fig:h_pos vs Fe}
\end{figure}
\begin{figure}%[tb]
	\centering
	\includegraphics[width=\columnwidth]{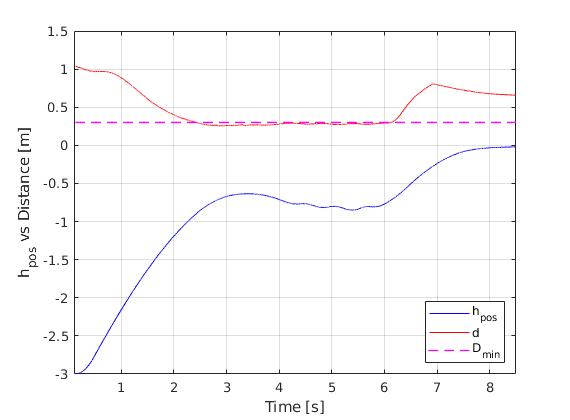}
	\caption{Value of the CBF $h_{pos}$ over time and distance between the end effector of the robot and the obstacle.}
	\label{fig:h_pos vs d}
\end{figure}
\begin{figure}%[tb]
	\centering
\includegraphics[width=\columnwidth]{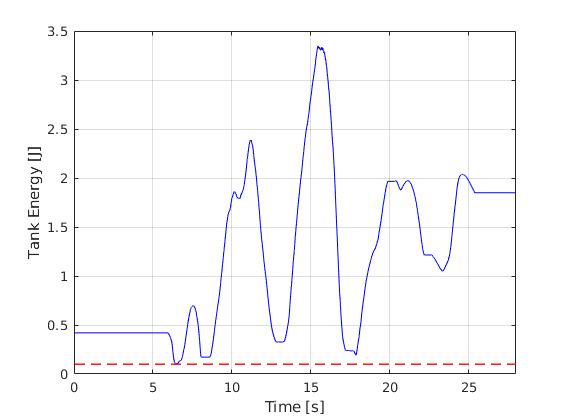}
	\caption{Evolution of the tank energy over time.}
	\label{fig:tank evolution}
\end{figure}
\section{Conclusions and Future Work}
\label{sec:conclusions}
In this paper we have proposed an optimization-based framework which combines CBFs and energy tanks for achieving a robust and flexible interaction, while respecting a set of time-varying constraints, thus guaranteeing the complete safety of the robotic application in a HRC scenario. We have formulated passivity as a constraint and inserted it, along other safety-relevant constraints, into a single convex optimization problem. Future work aims at establishing priorities among the different constraints in an optimal fashion and predicting the evolution of the energy in the tank.
\bibliographystyle{IEEEtran}
\bibliography{mybib.bib}

\end{document}